\begin{document}

\title{Text2Weight: Bridging Natural Language and Neural Network Weight Spaces}

\author{Bowen Tian}
\authornote{Both authors contributed equally to this research.}
\orcid{0009-0000-1305-3097}
\affiliation{%
\institution{Deep Interdisciplinary Intelligence Lab}
  \institution{The Hong Kong University of Science and Technology (Guangzhou)}
  \city{Guangzhou}
  \country{China}
}
\email{bowentian@hkust-gz.edu.cn}

\author{Wenshuo Chen\footnotemark[1]}
\orcid{0009-0002-1966-6059}
\affiliation{%
  \institution{Deep Interdisciplinary Intelligence Lab}
  \institution{The Hong Kong University of Science and Technology (Guangzhou)}
  \city{Guangzhou}
  \country{China}
}
\email{wenshuochen@hkust-gz.edu.cn}

\author{Zexi Li}
\orcid{0000-0003-0831-3549}
\affiliation{%
\institution{The University of Cambridge}
\state{Cambridge}
\country{UK} \\
\institution{Zhejiang University}
\city{Hangzhou}
\country{China}
}
\email{zexi.li@zju.edu.cn}

\author{Songning Lai}
\orcid{0009-0007-3132-9414}
\affiliation{%
\institution{Deep Interdisciplinary Intelligence Lab}
  \institution{The Hong Kong University of Science and Technology (Guangzhou)}
  \city{Guangzhou}
  \country{China}
}
\email{songninglai@hkust-gz.edu.cn}

\author{Jiemin Wu}
\orcid{0009-0005-2712-4876}
\affiliation{%
\institution{Deep Interdisciplinary Intelligence Lab}
  \institution{The Hong Kong University of Science and Technology (Guangzhou)}
  \city{Guangzhou}
  \country{China}
}
\email{jwu663@connect.hkust-gz.edu.cn}

\author{Yutao Yue}
\authornote{Correspondence to Yutao Yue \{yutaoyue@hkust-gz.edu.cn\}}
\orcid{0009-0005-2712-4876}
\affiliation{%
\institution{Thrust of Artificial Intelligence and Thrust of Intelligent Transportation}
  \institution{The Hong Kong University of Science and Technology (Guangzhou)}
  \city{Guangzhou}
  \country{China} \\
\institution{Institute of Deep Perception Technology}
\institution{Jiangsu Industrial Technology Research Institute}
\city{Wuxi}
\country{China}
}
\email{yutaoyue@hkust-gz.edu.cn}

\begin{abstract}
How far are we really from automatically generating neural networks? While neural network weight generation shows promise, current approaches struggle with generalization to unseen tasks and practical application exploration. To address this, we propose \textbf{T2W}, a diffusion transformer framework that generates task-specific weights conditioned on natural language descriptions. T2W hierarchically processes network parameters into uniform blocks, integrates text embeddings from CLIP via a prior attention mechanism, and employs adversarial training with weight-space augmentation to enhance generalization. Experiments on Cifar100, Caltech256, and TinyImageNet demonstrate T2W’s ability to produce high-quality weights for unseen tasks, outperforming optimization-based initialization and enabling novel applications such as weight enhancement and text-guided model fusion. Our work bridges textual semantics with weight-space dynamics, supported by an open-source dataset of text-weight pairs, advancing the practicality of generative models in neural network parameter synthesis. Our code is available on  \href{https://github.com/TianSuya/T2W}{Github}.
\end{abstract}

\begin{CCSXML}
<ccs2012>
   <concept>
       <concept_id>10010147.10010178</concept_id>
       <concept_desc>Computing methodologies~Artificial intelligence</concept_desc>
       <concept_significance>500</concept_significance>
       </concept>
 </ccs2012>
\end{CCSXML}

\ccsdesc[500]{Computing methodologies~Artificial intelligence}

%
\keywords{Multimodal, Text2Weight, Deep Weight Space}
\begin{teaserfigure}
\vspace{-10pt}
  \centering
  \includegraphics[width=0.8\textwidth]{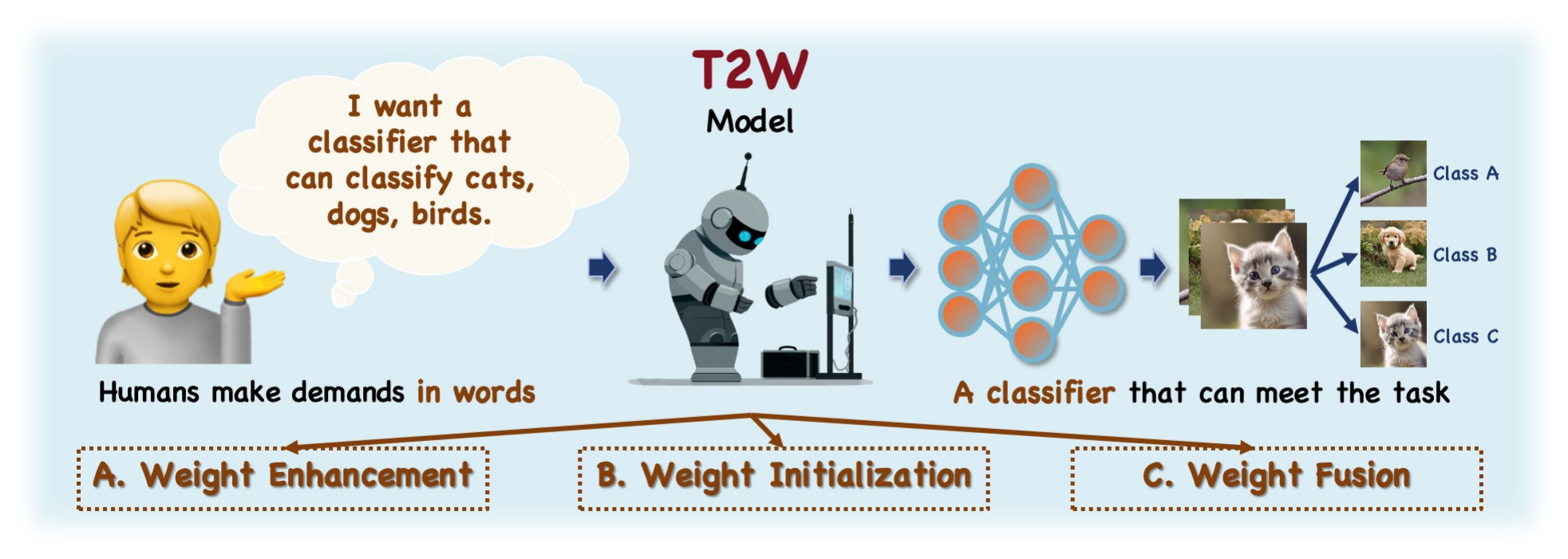}
  \vspace{-8pt}
  \caption{We can construct links between the \textbf{text space} and the \textbf{deep model weight space}, upon receiving a request for a classifier function given in the form of text, our T2W model can generate the appropriate classifier weights for use. And this technique has strong potential applications. Such as \underline{weight enhancement}, \underline{weight initialization}, and \underline{weight fusion}.}
  \label{fig:teaser}
\vspace{+4pt}
\end{teaserfigure}

\maketitle

\vspace{-8pt}
\section{Introduction}
Deep learning advances have enabled neural networks to process diverse modalities, including images \cite{voulodimos2018deep, tian2025pepl, he2016deep}, text \cite{vaswani2017attention, wang2025mdpo}, and audio \cite{kreuk2022audiogen, borsos2023audiolm}. \textbf{Recent research explores neural network weights as a novel data modality}. Studies \cite{lim2023graph, navon2023equivariant, zhou2023permutation} focus on designing meta-networks by exploiting network symmetries, while \cite{schurholt2021self, schurholt2022hyper} develop hyper-representations through weight-space data augmentation. Works like \cite{ilharco2022editing, ainsworth2022git} investigate weight fusion properties.

The rapid progress in generative AI has enabled high-quality generation across diverse data modalities. In the domain of neural network weight generation, diffusion-based methods have attracted significant attention due to their robust modeling, generalization, and stability. For instance, \cite{peebles2022learning} pioneered the use of diffusion models for conditional parameter generation by leveraging model loss as a condition. Subsequent works, including \cite{jin2024conditional, wu2024difflora, gong2024efficient, nava2022meta}, and \cite{xie2024weight}, have expanded diffusion-based weight generation to broader scenarios. However, most existing methods rely on generating weights trained on seen tasks, limiting generalization to \textbf{unseen tasks (tasks not seen during the training phase)} and hindering practical utility \cite{jin2024conditional, wu2024difflora}. We attribute this limitation to the narrow distribution of generation conditions (e.g., loss values \cite{peebles2022learning} or dataset embeddings \cite{jin2024conditional}), which fail to guide models in capturing out-of-distribution tasks. Thus, developing a more generalizable and versatile weight generation framework is critical for advancing this field.

In this paper, we propose \textbf{T2W} (\underline{\textbf{T}}ext\underline{\textbf{2}}\underline{\textbf{W}}eight), a diffusion transformer (DiT) based model\cite{peebles2023scalable} that generates neural network weights conditioned on textual task descriptions. The DiT architecture is particularly advantageous for neural parameter synthesis: its self-attention mechanism inherently handles hierarchical parameter blocks as sequential tokens, enabling global dependency modeling across distributed weight components—capabilities limited in traditional DDPMs that rely on local convolutional priors. Unlike latent diffusion approaches \cite{rombach2022high} that compress data into lower-dimensional spaces, T2W operates directly on the structured weight space for valid weight generation. Specifically, we partition network weights into uniform blocks after hierarchical grouping, forming a sequence of parameter blocks. Task descriptions (detailed in \autoref{subsec:dataset}) are embedded into text feature vectors via CLIP’s text encoder \cite{radford2021learning} and fused with parameter blocks through a prior attention mechanism (detailed in \autoref{sec:experiments}) to construct T2W’s input. Before entering the transformer, each input component is mapped to fixed-dimensional tokens. The transformer’s output predicts the denoised neural network parameters.

During training, we enhance generation quality by incorporating weight-space permutation symmetry constraints and introducing a weight-space discriminator for adversarial training. We construct extensive weight datasets on CIFAR-100 \cite{krizhevsky2009learning}, Caltech-256 \cite{griffin2007caltech}, and TinyImageNet \cite{tiny-imagenet} (\autoref{subsec:dataset}) to train and evaluate T2W’s effectiveness, achieving compelling experimental results.

Our experiments demonstrate that T2W successfully generalizes to unseen tasks, generating high-quality weights. The average classification accuracy of the classifiers generated for the TinyImageNet sub-dataset is over \textbf{80\%} in unseen tasks. T2W producing weights exhibiting superior generalization and robustness (detailed in \autoref{fig:loss}). We validate T2W’s capabilities in three downstream applications:  
\textbf{(1) Weight initialization}: T2W generated weights outperform random initialization in supervised learning.  
\textbf{(2) Post-training weight enhancement}: Denoising pre-trained weights via T2W yields improved performance.  
\textbf{(3) Weight fusion}: By translating weight fusion into text-description fusion, T2W achieves promising results compared to conventional methods.

Our contributions are summarized as follows: \underline{\textbf{(i)}} We propose T2W, bridging text features to model weights and advancing multimodal understanding of weight-space as a distinct modality. \underline{\textbf{(ii)}} We curate and open-source a large-scale text-weight paired dataset to foster progress in weight generation. \underline{\textbf{(iii)}} We explore diverse downstream applications of T2W (e.g., weight initialization, enhancement, fusion), demonstrating its practical potential and value.

\vspace{-8pt}
\section{Problem Formulation}
We aim to synthesize neural network weights directly from natural language task descriptions, bridging textual semantics and functional parameterizations. First we can give detailed definitions of \underline{Textual Task Space} and \underline{Weight Space}:

\noindent \textbf{Textual Task Space. } $\mathcal{C}$ be the set of natural language task descriptions $c$, where $c$ is a detailed task description list (e.g.,["a photo of bird", "a photo of a cat",...]).

\noindent \textbf{Weight Space. } $\Theta \in \mathbb{R}^d$ denotes the space of valid neural network weights, constrained by architectural symmetry and trainability (e.g., permutation invariance across hidden units).

The core problem is to learn a conditional generative distribution $p(\theta_g|c)$, where $\theta_g \in \Theta$, that satisfies three fundamental properties:

\noindent\textbf{A. Task-Weight Consistency}: Generated weights must induce a model $f_{\theta_{g}}$ whose behavior aligns with the semantics of $c$. Formally, for a task-specific performance metric $\mathcal{M}(f_{\theta_{g}},c)$ we require:
\begin{equation}
    \mathbb{E}_{\theta_g \sim p(\theta_g|c)}[\mathcal{M}(f_{\theta_{g}},c)] \geq \tau,
\end{equation}
\noindent where $\tau$ is a task-dependent threshold. For classification tasks, $\mathcal{M}(\cdot)$ could represent accuracy on a validation set.

\noindent\textbf{B. Weight-Space Validity}: The generated $\theta_g$ must reside on the manifold $\mathcal{M}_\Theta$ of trainable parameters. This requires $\theta_g$ to satisfy architectural symmetry constraints. Let $\mathcal{G}$ be the group of symmetry transformations (e.g., permutations of neurons in a layer). Validity is enforced by:
\begin{equation}
    p(g\cdot\theta_g|c) = p(\theta_g|c)\ \ \ \ \forall g\in \mathcal{G},\ \forall \theta_g \in \Theta,
\end{equation}
\noindent ensuring invariance to symmetry operations during generation. However, in actual practice, existing generative models often have difficulty with this constraint, which needs to be addressed in subsequent work.

\noindent\textbf{C. Generalization to Unseen Tasks:} The model must generalize to task descriptions $c' \in \mathcal{C}_{unseen}$, where $\mathcal{C}_{unseen}$ is disjoint from the training distribution $\mathcal{C}_{seen}$. Let $\mathcal{D}_{train} = \{(\theta^i,c^i)|i\in[1,N_{train})\}, c\in\mathcal{C}_{seen}$ be the training set and $\mathcal{D}_{test} = \{(\theta^i,c'^i)|i\in[1,N_{test})\}, c'\in\mathcal{C}_{unseen}$ be the test set, We require:
\begin{equation}
    \mathbb{E}_{c'\sim\mathcal{C}_{unseen}}\big[\mathbb{E}_{\theta_g \sim p(\theta_g|c')}[\mathcal{M}(f_{\theta_{g}},c')]\big] \geq \tau_{test},
\end{equation}
\noindent where $\tau_{test}$ defines acceptable out-of-distribution performance.

\section{T2W: Our Method}
\label{sec:method}
We present the dataset construction and preprocessing methods in \autoref{subsec:dataprep}, the loss design for the Diffusion training process in \autoref{subsec:diffusionloss}, the loss design for the symmetry alignment constraints in the weight space in \autoref{subsec:permutationloss}, along with the loss design for the adversarial constraints in \autoref{subsec:adversarialloss}, and finally the overall loss function composition in \autoref{subsec:overallloss}. \autoref{fig:overview} illustrates the overview of our method.

\begin{figure*}[htbp]
\centering
\includegraphics[width=0.75\linewidth]{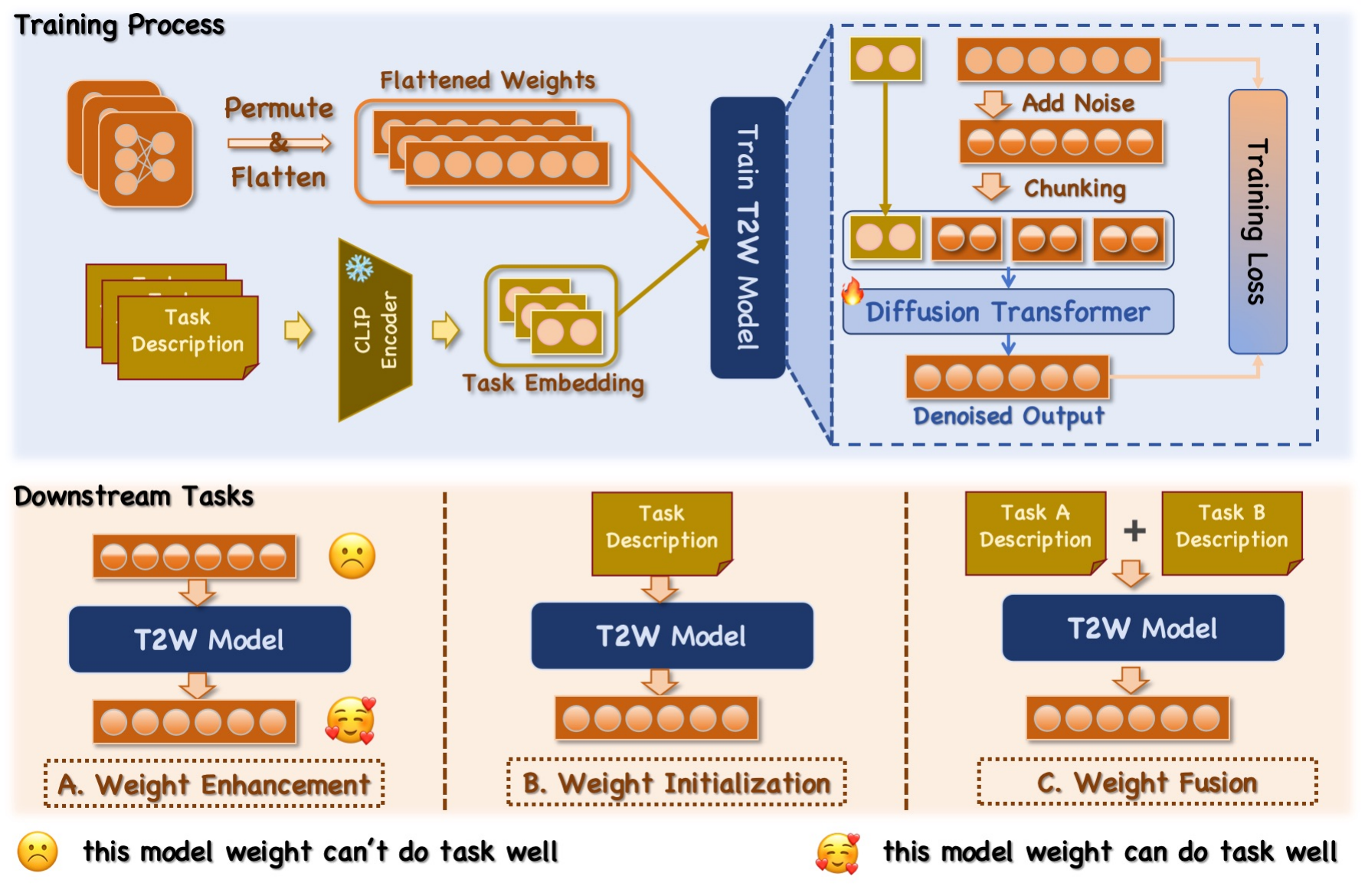}
\vspace{-5pt}
\caption{An overview of the \textbf{T2W} framework.}
\label{fig:overview}
\vspace{-5pt}
\end{figure*}

\subsection{Data Preparation}
\label{subsec:dataprep}
Since the neural network weight space $\mathbb{R}^d$ is often a high-dimensional space, if the weights are directly flatten as one-dimensional vectors for model inputs, the subsequent linear layers are often mapped to a relatively low feature dimension resulting in the loss of key information of the model weights under the excessive dimensionality reduction.

Inspired by \cite{peebles2022learning, li2024text}, we perform an initial chunking decoupling before the model weights are fed into the Transformer module. This limits each block of input to a certain size, and then passes through the mapping layer without losing too much information, which can be expressed as:
\begin{equation}
    chunk(\theta_{in})=[\theta_{in}^1,\theta_{in}^2,\dots,\theta_{in}^{N_c}],
\end{equation}
\noindent where $\theta_{in} \in \mathbb{R}^d$, $\theta_{in}^i \in \mathbb{R}^{\lfloor\frac{d}{s}\rfloor},\ i\in [1, N_c)$, $\theta_{in}^{N_c} \in \mathbb{R}^{d-\lfloor\frac{d}{s}\rfloor\cdot(N_c-1)}$. $s$ denotes the size of each block after chunking, and $N_c$ denotes the number of blocks after chunking.

After chunking, we map the data separately (including the text condition vector $v_c$, which is detailed in \autoref{subsec:diffusionloss}) according to the configuration before it enters the Transformer module, so that they are all converted to a uniform dimension that is entered into the Transformer as a Token. Which can be expressed as:
\begin{equation}
    {\tilde{\theta}}^i_{in} = \pi^i(\theta_{in}^i),\ i\in [1,N_c],\ {\tilde{\theta}}_{in}^i \in \mathbb{R}^h,
\end{equation}
\noindent where $h$ denotes the hidden size of each token and $\pi$ denotes the projection transformation.

\subsection{Diffusion Loss Design}
\label{subsec:diffusionloss}
\noindent\textbf{Forward Process}: Given a neural network weight vector $\theta_0$, we define the forward diffusion process as a Markov chain gradually adding Gaussian noise over N steps:
\begin{equation}
\begin{aligned}
    q\left(\theta_{1: N} \mid \theta_{0}\right)&=\prod_{n=1}^{N} q\left(\theta_{n} \mid \theta_{n-1}\right),\\
    \quad q\left(\theta_{n} \mid \theta_{n-1}\right)=\mathcal{N}&\left(\theta_{n} ; \sqrt{1-\beta_{n}} \theta_{n-1}, \beta_{n} \mathbf{I}\right),
\end{aligned}
\end{equation}
\noindent where $\beta_n \in(0,1)$ is a predefined noise schedule.

\noindent \textbf{Reverse Process}: The denoising network $\epsilon_{\phi}(\cdot)$ learns to reverse the diffusion process, conditioned on textual task descriptions $c$. At each step $n$, it predicts the noise added to $\theta_n$:
\begin{equation}
    p_{\phi}(\theta_{n-1}|\theta_n, v_c) = \mathcal{N}\big(\theta_{n-1};\mu_{\phi}(\theta_n,n,v_c),\Sigma_n\big),
\end{equation}
\noindent where $v_c = E_{clip}(c) \in \mathbb{R}^{d_c}$, $E_{clip}(\cdot)$ denotes the process of encoding the text description based on the clip text encoder, resulting in a vector containing the semantics of the text ($v_c$). $d_c$ denotes the dimension of the textual semantic vector. and $\mu_\phi$ is parameterized as:
\begin{equation}
    \mu_\phi(\theta_n,n,v_c) = \frac{1}{\sqrt{\alpha_n}}\left(\theta_n - \frac{\beta_n}{\sqrt{1-\bar{\alpha}_n}}\epsilon_\phi(\theta_n,n,v_c)\right),
\end{equation}
\noindent here, $\alpha_n=1-\beta_n$, $\bar{\alpha}_n=\Pi_{i=1}^n\alpha_i$, and $\Sigma_n=\frac{1-\bar{\alpha}_{n-1}}{1-\bar{\alpha}_n}\beta_n\mathbf{I}$.

\noindent \textbf{Training Objective}: The diffusion loss minimizes the MSE between forward and reverse processes:
\begin{equation}
    \mathcal{L}_{\text{diff}} = \mathbb{E}_{n,\theta_0,\epsilon}\left[ \parallel \epsilon-\epsilon_\phi\left( \sqrt{\bar{\alpha}_n}\theta_0 + \sqrt{1-\bar{\alpha}_n}\epsilon,n,v_c \right) \parallel^2 \right],
\end{equation}
\noindent where $\epsilon \sim \mathcal{N}(0,\mathbf{I})$, $n \sim \mathcal{U}\{1,N\}$, and $\theta_0$ is a ground-truth weight vector.

\subsection{Permutation-Equivariant Constraint}
\label{subsec:permutationloss}
To enforce invariance to weight-space symmetries (e.g., neuron permutations in a layer \cite{ainsworth2022git}), we define a group $\mathcal{G}$ of valid transformations $g:\Theta \rightarrow \Theta$. For each training weight , we augment the data by sampling $g \sim \mathcal{G}$ and applying:
\begin{equation}
    \theta^{aug}_0 = g\cdot\theta_0.
\end{equation}
And the denoising network must satisfy equivariance:
\begin{equation}
    \epsilon_\phi(g\cdot\theta_n,n,v_c) = g\cdot \epsilon_\phi(\theta_n,n,v_c),
\end{equation}
\noindent Based on this observation, we can construct a symmetric constraint loss on the weight space, denoted as:
\begin{equation}
    \mathcal{L}_{\text{sym}} = \mathbb{E}_{g\sim\mathcal{G}}\left[\parallel \epsilon_\phi(g\cdot\theta_n,n,v_c) - g\cdot \epsilon_\phi(\theta_n,n,v_c) \parallel^2 \right]
\end{equation}
Much of the existing work relies on direct permutation data augmentation of the weight data to achieve this constraint, and in \autoref{app:theoretical} we demonstrate the superiority of direct explicit loss constraints for T2W, as well as the theoretical error of permutation data augmentation.


\begin{table*}[!ht]
\centering
\caption{Performance Comparison on Seen and Unseen Tasks. where the best results (or near-best results) are labeled in \textbf{bold} and the next best results are \underline{underlined}.}
\label{tab:performance}
\vspace{-8pt}
\normalsize
\resizebox{0.8\linewidth}{!}{
\begin{tabular}{@{}l *{4}{S[table-format=1.2e-1] S[table-format=2.2]}@{}}
\toprule
\multirow{3}{*}{Methods} & \multicolumn{8}{c}{Datasets} \\
\cmidrule(lr){2-9}
 & \multicolumn{2}{c}{CIFAR-100} & \multicolumn{2}{c}{Tiny-ImageNet} & \multicolumn{2}{c}{Caltech-256} & \multicolumn{2}{c}{Avg} \\
\cmidrule(lr){2-3} \cmidrule(lr){4-5} \cmidrule(lr){6-7} \cmidrule(lr){8-9}
 & {Loss} & {Accuracy} & {Loss} & {Accuracy} & {Loss} & {Accuracy} & {Loss} & {Accuracy} \\
\midrule
\multicolumn{9}{@{}l}{\textit{Seen Task}} \\
\midrule
Universal Model    & 5.44e-5 & 55.33 & 3.98e-4 & 63.72 & 2.45e-5 & 72.33 &   {-}    & 63.79 \\
T2W-CLIP         & 2.56e-5 & \textbf{65.67} & 1.27e-4 & \underline{69.24} & 2.24e-5 & \underline{81.53} &     {-}   & \underline{72.15} \\
T2W-NL           & 1.84e-5 & \underline{65.59} & 7.14e-5 & \textbf{70.31} & 1.67e-5 & \textbf{82.64} &     {-}   & \textbf{72.85} \\
\midrule
\multicolumn{9}{@{}l}{\textit{Unseen Task}} \\
\midrule
Universal Model    & 4.12e-5 & 56.74 & 4.13e-4 & 62.75 & 2.57e-5 & 72.59 &    {-}     & 64.03 \\
T2W-CLIP         & 2.98e-5 & \textbf{65.99} & 1.25e-4 & \underline{70.42} & 2.21e-5 & \underline{81.89} &      {-}  & \underline{72.77} \\
T2W-NL           & 2.35e-5 & \textbf{65.98} & 7.94e-5 & \textbf{71.51} & 1.69e-5 & \textbf{83.15} &     {-}   & \textbf{73.55} \\
\bottomrule
    \end{tabular}}
\end{table*}

\begin{figure*}[h]
\centering
\includegraphics[width=0.85\linewidth]{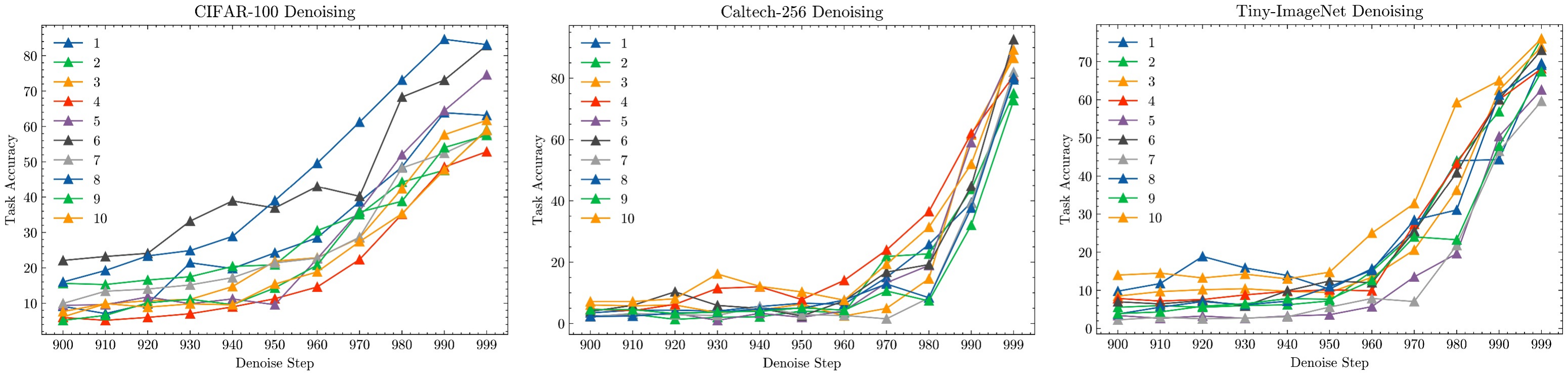}
\vspace{-5pt}
\caption{We have taken samples from the Unseen Task of each dataset for the visualization of denoising.}
\label{fig:denoise-vis}
\vspace{-0.5cm}
\end{figure*}

\subsection{Adversarial Training Objective}
\label{subsec:adversarialloss}
We assume that the weight space conforms to a latent data distribution pattern, while the denoising process has no a prior preconceptions about the distribution of the generated data since it is predicted from the noise, in order to place constraints on the distribution of the data generated by the diffusion process, we introduce an \textbf{adversarial training loss}, specifically, we train a discriminator for determining whether the source of a given weight is a generator or a real distribution. By improving the performance of the discriminator, we can give certain constraints on the data distribution generated by Diffusion.

A weight-space discriminator $D_\Phi:\Theta \rightarrow [0,1]$ is trained to distinguish real weights $\theta_0$ from generated weights $\theta_{gen} \sim p_\phi(\theta|c)$. The adversarial loss follows:
\begin{equation}
    \mathcal{L}_{\text{adv}} = \mathbb{E}_{\theta_0 \sim p_{data}}[\log D_{\Phi}(\theta_0)] + \mathbb{E}_{\theta_{gen}\sim p_\phi}[\log (1-D_\Phi(\theta_{gen}))]
\end{equation}
The discriminator $D_\Phi$ guides $\epsilon_\phi$ to produce weights that lie on the manifold of valid neural network weights.

\subsection{Overall Loss Objective}
\label{subsec:overallloss}
The total loss combines diffusion reconstruction, adversarial training, and symmetry regularization:
\begin{equation}
    \mathcal{L}_{\text{total}} = \mathcal{L}_{\text{diff}} + \lambda_1\cdot\mathcal{L}_{\text{sym}} + \lambda_2\cdot\mathcal{L}_{\text{adv}}
\end{equation}
where $\lambda_1$ and $\lambda_2$ are used as hyperparameters to control the magnitude of other losses.

\section{Experiments}
\label{sec:experiments}

\subsection{Dataset Construction and Training Details}
\label{subsec:dataset}
\textbf{Data Sources}:
We establish text-to-weight correspondence through a specialized vision-language architecture. Our framework employs:

\noindent \textbf{1. Frozen Feature Extractor}: Pre-trained ResNet-18 \cite{he2016deep} (weights fixed) for image feature extraction, producing $512$-dimensional embeddings

\noindent \textbf{2. Trainable CLIP-Projection Head}: A two-layer adapter mapping ResNet features to CLIP's text space:
    \begin{equation}
        W_{\text{head}} = \{W_1 \in \mathbf{R}^{512\times16}, W_2 \in \mathbf{R}^{16\times512}\}
    \end{equation}

\noindent \textbf{3. Task-Specific Generation Target}: Only $W_{\text{head}}$ parameters are synthesized, enabling architecture-agnostic classification through CLIP compatibility

\noindent For each base dataset (CIFAR-100, Caltech256, TinyImageNet), we generate 12,000 subtasks via:

\noindent \textbf{1. Random class selection:} $k \sim \mathcal{U}\{8,32\}$ classes per subtask.

\noindent \textbf{2. Text description templating:} ["A photo of {class\_1}", "A photo of {class\_2}", ..., "A photo of {class\_k}"].

\noindent \textbf{3. CLIP text encoding:} Since the textual description of a given k-categorization task is actually a list of strings, each representing a natural language description of a given category, we perform the embedding by feeding each string into the CLIP model to obtain the embedding vectors, finding their mean values, and then weighting and summing each embedding vector according to its cosine similarity with respect to that mean value to obtain the final embedding vector. The embedding model we use is CLIP ViT-B/32, while the embedding vector for a task is $512$D.

\noindent \textbf{4. Head-parameter training:} The projection head parameters $W_{\text{head}} = \{W_1, W_2\}$ are optimized to align ResNet-18 image features with CLIP text embeddings. Given an input image $x$ with class label $y$, we first extract frozen ResNet-18 features $E_{\text{image}}(x) \in \mathbb{R}^{512}$, then compute the projected features through two linear transformations: 
$$
    h_{\text{proj}} = \left(E_{\text{image}}(x) W_1\right) W_2 \in \mathbb{R}^{512}
$$
where $W_1 \in \mathbf{R}^{512\times16}$ and $W_2 \in \mathbf{R}^{16\times512}$ are learnable low-rank adapters. The classification logits for $k$ classes are obtained via dot-product similarity between $h_{\text{proj}}$ and CLIP text embeddings $E_{\text{text}}(c_i) \in \mathbb{R}^{512}$ of each class description $c_i$:
$$
    s_i = h_{\text{proj}} \cdot E_{\text{text}}(c_i) \quad \text{for } i=1,\dots,k
$$
The standard cross-entropy loss $\mathcal{L} = -\sum_{i=1}^k y_i \log \frac{e^{s_i}}{\sum_j e^{s_j}}$ is minimized to train $W_1$ and $W_2$, establishing task-specific decision boundaries in CLIP's multimodal space.
where $E_{\text{image}}$ denotes ResNet-18 features and $E_{\text{text}}$ CLIP text embeddings

This methodology produces 36,000 text-to-weight pairs (12,000 per base dataset), with generated weights $W_{\text{head}}\in\mathbb{R}^{512\times16\times2}$ through parameter flattening. More detailed dataset settings are provided in \autoref{app:implementation}.

\noindent \textbf{Data Splits.} The dataset is partitioned at the task level: For each of the three base datasets (CIFAR-100, Caltech256, and TinyImageNet), 80\% (9,600) of their 12,000 constructed subtasks are designated as seen tasks for training, while the remaining 20\% (2,400) are held out as unseen tasks for evaluation. Since there is no duplication of the 12,000 subtasks in each dataset, there is no overlap between the Seen Task and the Unseen Task.

\noindent \textbf{Preprocessing.} The weight preprocessing pipeline involves three key steps: (1) \textbf{Chunking} partitions flattened network parameters into 576 uniform blocks, (2) \textbf{Normalization} scales each block to the [-1, 1] range using min-max scaling, and (3) \textbf{Token Mapping} projects each normalized block to 1024-dimensional tokens via dedicated linear layers for transformer input. During generation, an inverse process applies corresponding linear transformations and denormalization to reconstruct the original parameter dimensions and scales.

\noindent \textbf{Training Setup.} The prediction network employs a mask-free transformer decoder architecture trained with batch size 32 and Adam optimizer ($lr=4\times\!10^{-4}$), using cosine learning rate scheduling with 5-epoch warmup and gradient clipping at 0.1. The adversarial discriminator—a 4-layer MLP with ReLU activations—is alternately trained every epoch at $lr=10^{-2}$. For permutation symmetry, we randomly perform a permutation symmetry transformation of the model weights in a batch with the pre-transformation batch to compute the permutation symmetry constraint loss, with the coefficient $\lambda_1=0.1$. All results are averages of three independent experiments.

\noindent \textbf{Implementation.} All experiments were conducted on 4×NVIDIA RTX 4090 GPUs (24GB VRAM) with PyTorch 2.1. Each training run completes in 1 hour of wall-clock time using mixed-precision acceleration. For reproducibility, we fix random seeds to 42,34,3407 for each independent experiment across PyTorch, NumPy, and Python's native RNG.
\vspace{-0.5cm}
\begin{figure*}[t]
\centering
\includegraphics[width=0.8\linewidth]{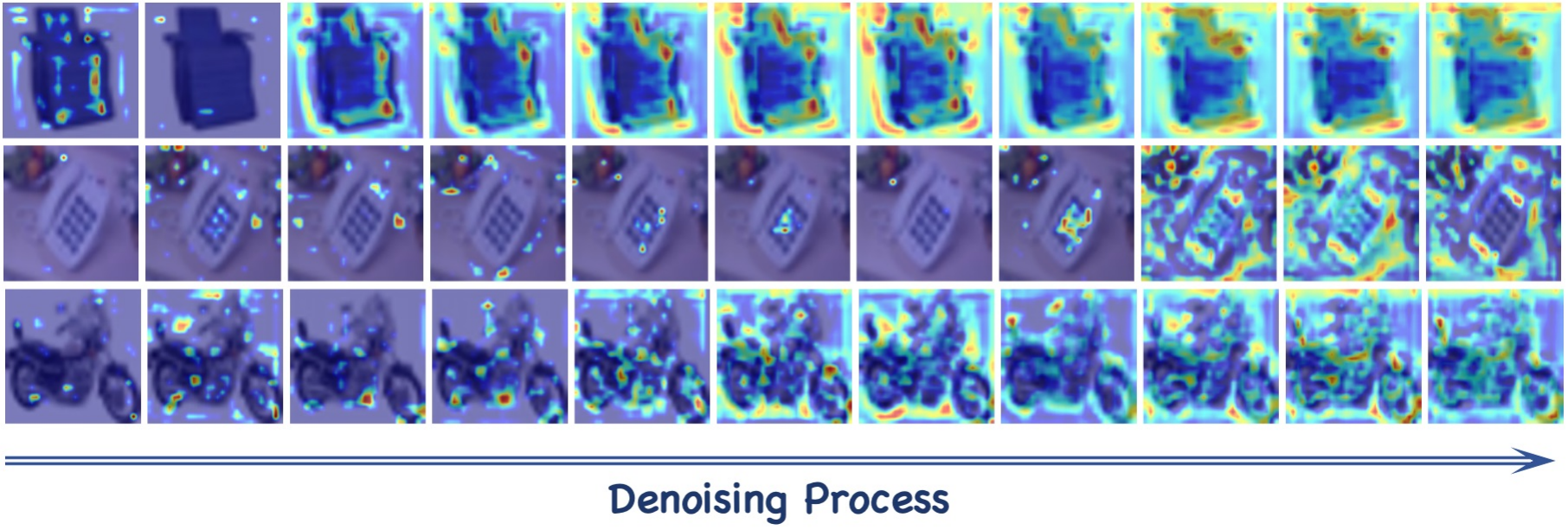}
\vspace{-8pt}
\caption{We visualize a gradCAM plot of the generated model weights on the data for the target task during the denoising process.}
\label{fig:active}
\vspace{-0.5cm}
\end{figure*}

\vspace{-8pt}
\subsection{Performance Evaluation on Seen and Unseen Tasks}
\label{subsec:evaluation}
\textbf{Baselines} We establish two key baselines to contextualize T2W's performance:

\underline{Universal Model}: Trains the CLIP classification head directly on full datasets using Adam optimizer ($\eta=3\!\times\!10^{-4}$, 200 epochs). Thanks to the compatibility of the CLIP classification head, the trained universal model can do a fair comparison with the T2W model on all downstream tasks.

Our T2W framework evaluates two variants:

\underline{T2W-NL}: Natural language task descriptions (e.g., "A small freshwater fish with vibrant orange-gold scales, commonly kept in aquariums.")

\underline{T2W-CLIP}: Standard CLIP prompts ("A photo of \{class\_i\}")

\noindent \textbf{Metrics.} We evaluate performance through two principal metrics:

\underline{Task Accuracy (Accuracy)}: Average test-set accuracy of models instantiated with generated weights across all tasks:
$$
\text{Acc} = \frac{1}{|T|}\sum_{t\in T} \left( \frac{1}{|D_t^{\text{test}}|}\sum_{(x,y)\in D_t^{\text{test}}} \mathbb{I}\left(f_{\theta_g}(x) = y\right) \right)
$$
where $T$ denotes either seen or unseen task sets, $\theta_g$ the generated weights, and $\mathbb{I}(\cdot)$ the indicator function.

\underline{Weight Similarity (Loss)}: Mean squared error between generated and target weights:
$$
\text{Loss} = \mathbb{E}_{t\sim T} \left[ \|\theta_g^{(t)} - \theta_{\text{target}}^{(t)}\|_2^2 \right]
$$
measuring parameter-space fidelity to conventionally trained weights.

\noindent \textbf{Results.} From the experimental results (\autoref{tab:performance}), it can be seen that T2W performs well on both the set of tasks that have been seen during training (\textbf{Seen Task}) and the set of tasks that have not been seen during training (\textbf{Unseen Task}), and even the average accuracy on Unseen Task is generally higher than that of Seen Task (Since the magnitude of Loss varies widely across datasets, the Avg column does not report Loss data). This fully proves the strong generalization ability that T2W possesses. At the same time, T2W achieves superior performance compared to universal models based on gradient training (\textbf{the average accuracy is generally improved by about 11\% on CIFAR-100, and about 10\% on Tiny-ImageNet as well as Caltech-256}). Also, we can observe that the average accuracy of the T2W-NL variant is generally higher than that of T2W-CLIP, which may be due to the fact that category descriptions based on long natural language may lead to a more evenly distributed textual feature space, which can help T2W to better realize the mapping from textual space to weight space.

\noindent \textbf{Visualization} In order to demonstrate the effect of T2W more intuitively, we randomly selected 10 sample points in the \textbf{Unseen Task} of each dataset, input the list of task descriptions of the sample points into the T2W model, and visualized the accuracies of the weights at different denoising steps of T2W on the corresponding test sets. As shown in \autoref{fig:denoise-vis}, it can be seen that the closer the denoising step is to the later stage, the higher the accuracy of the generated model on the test set, and eventually reaches a high level of accuracy. This can indicate that T2W approximates the optimized weights well during the denoising process. At the same time, we observe that the generated weights tend to have a significant effect improvement on the test set only after 900 steps of denoising, which may be caused by the fact that the neural network weights are an extremely sensitive data, and the effect can only be reflected on the test set when the distribution of the generated weights is very close to the target distribution.

We also used gradCAM \cite{selvaraju2017grad} to visualize the activation maps of the models obtained using generative weights on the target task dataset during the denoising process. As shown in \autoref{fig:active}, it can be seen that as the denoising steps increase, the generative weights reveal stronger and stronger attention on the target data (the activation maps become more and more focused on objects with semantic information in the image), which further demonstrates the effectiveness of the T2W denoising process.

\vspace{-8pt}
\subsection{Ablation Studies}
\label{subsec:ablation}
\vspace{-8pt}

\begin{table}[!ht]
\centering
\caption{Ablation Study of T2W Framework Components.}
\label{tab:ablation}
\vspace{-8pt}
\sisetup{exponent-product = \ensuremath{\cdot}} 
\resizebox{0.8\columnwidth}{!}{
\begin{tabular}{@{}l l S[table-format=1.2e-1] S[table-format=2.2]@{}}
\toprule
\multicolumn{1}{c}{\multirow{2}{*}{Setting}} & \multicolumn{1}{c}{\multirow{2}{*}{Dataset}} & \multicolumn{2}{c}{T2M} \\
\cmidrule(lr){3-4}
 &  & \multicolumn{1}{c}{Loss} & \multicolumn{1}{c}{Accuracy (\%)} \\
\midrule
\multirow{3}{*}{Full T2W}        & CIFAR-100      & 2.98e-5 & \textbf{65.99} \\
                                  & Tiny-ImageNet  & 1.25e-4 & \textbf{70.42} \\
                                  & Caltech-256    & 2.21e-5 & \textbf{81.89} \\
\midrule
\multirow{3}{*}{w/o Adversarial} & CIFAR-100      & 1.92e-4 & 62.96 \\
                                  & Tiny-ImageNet  & 2.01e-4 & 67.55 \\
                                  & Caltech-256    & 2.24e-5 & 81.22 \\
\midrule
\multirow{3}{*}{w/o Permutation} & CIFAR-100      & 2.98e-5 & \underline{65.52} \\
                                  & Tiny-ImageNet  & 1.29e-4 & \underline{70.29} \\
                                  & Caltech-256    & 2.44e-5 & \underline{81.32} \\
\midrule
\multirow{3}{*}{w/o Both}        & CIFAR-100      & 1.88e-4 & 63.15 \\
                                  & Tiny-ImageNet  & 2.09e-4 & 67.32 \\
                                  & Caltech-256    & 2.23e-5 & 81.29 \\
\bottomrule
\vspace{-1cm}
\end{tabular}}
\end{table}

\begin{table*}[t]
\caption{Comparative Experimental Results of Model Fusion using T2W (Accuracy).}
\label{tab:fusion}
\vspace{-8pt}
\resizebox{0.8\linewidth}{!}{
\begin{tabular}{ccccccccccc}
\toprule
\multirow{3}{*}{Model Index} & \multicolumn{10}{c}{Datasets}                                                                                                          \\
\cmidrule(lr){2-11}
                             & \multicolumn{3}{c}{CIFAR-100 S} & \multicolumn{3}{c}{Tiny-Imagenet S} & \multicolumn{3}{c}{Caltech-256 S} & \multirow{2}{*}{Avg Total} \\
                             \cmidrule(lr){2-4} \cmidrule(lr){5-7} \cmidrule(lr){8-10} 
                             & Dataset A  & Dataset B & Avg    & Dataset A   & Dataset B   & Avg     & Dataset A   & Dataset B  & Avg    &                            \\
                             \cmidrule(lr){1-11}
Model A                      & \textbf{73.29}      & 22.22     & 47.75  & \textbf{75.71}       & 8.01        & 41.86   & \textbf{80.74}       & 19.12      & 49.93  & 46.51                \\
Model B                      & 23.29      & \textbf{83.33}     & 53.31  & 12.02       & \textbf{86.44}       & 49.23   & 18.52       & \textbf{92.16}      & 55.34  & 52.62                \\
Git Re-basin\cite{ainsworth2022git} & 53.14      & 49.67     & 51.41 & 68.02       & 55.11       & \underline{61.57}   & 63.7        & 67.16      & 65.43  & \underline{59.46}                \\
T2W Generation               & \underline{72.71}      & \underline{79.67}     & \textbf{76.19}  & \underline{73.14}       & \underline{85.78}       & \textbf{79.46}   & \underline{72.59}       & \underline{88.73}      & \textbf{80.66}  & \textbf{78.77} \\
\bottomrule

\end{tabular}}
\end{table*}

\begin{figure*}[t]
\centering
\includegraphics[width=0.85\linewidth]{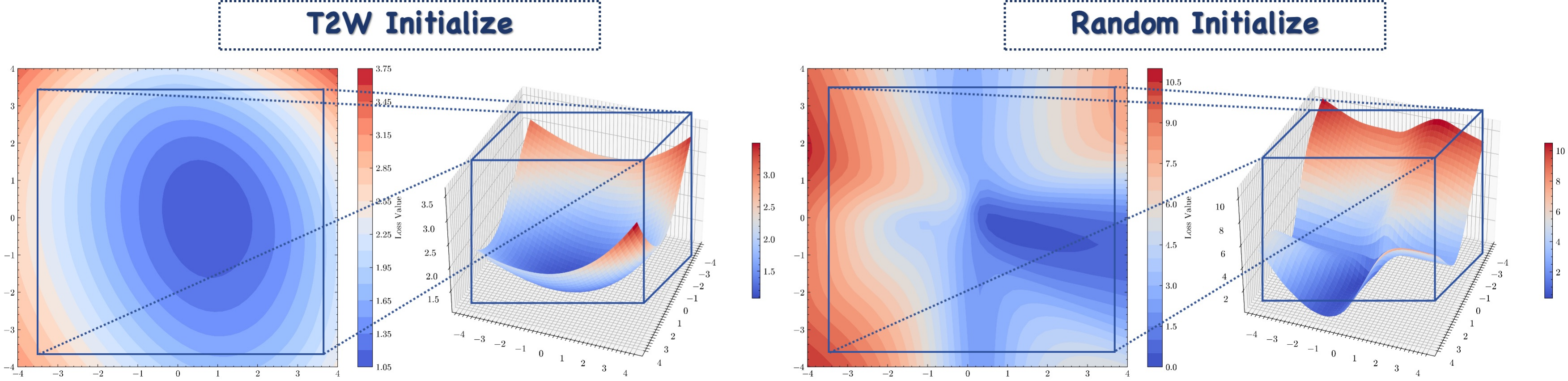}
\caption{Shape of the loss basin when training the model with different initialization methods.}
\label{fig:loss}
\vspace{-0.1cm}
\end{figure*}

\textbf{Setting.} We evaluate component contributions through four model variants: (1) Full T2W, (2) without Adversarial ($\lambda_2=0$), (3) without Permutation ($\lambda_1=0$), (4) without Both ($\lambda_2=0, \lambda_1=0$). Each variant is evaluated on all three datasets' \textbf{unseen tasks} using the Accuracy and Loss metrics. Training follows the same hyperparameters as the main experiments except for disabled components, with evaluations conducted on 500 randomly sampled tasks per dataset to ensure computational efficiency. All variants share identical initialization seeds and training schedules.

\noindent\textbf{Analysis.} The ablation study validates the critical roles of adversarial training and permutation-equivariant constraints in T2W. The results are shown in \autoref{tab:ablation}. Removing adversarial training ($\lambda_2\!=\!0$) significantly degrades performance on CIFAR-100 (accuracy drops from \textbf{65.99\%} to \textbf{62.96\%}, loss increases from $2.98\!\times\!10^{-5}$ to $1.92\!\times\!10^{-4}$) and Tiny-ImageNet (\textbf{70.42\%} $\rightarrow$ \textbf{67.55\%}, $1.25\!\times\!10^{-4}$ $\rightarrow$ $2.01\!\times\!10^{-4}$), confirming its necessity for enforcing weight-space manifold validity. Disabling permutation-equivariance ($\lambda_1\!=\!0$) causes subtle accuracy declines (e.g., \textbf{65.99\%} $\rightarrow$ \textbf{65.52\%} on CIFAR-100), highlighting its role in handling architectural symmetries. The combined removal of both components leads to the worst performance (e.g., Tiny-ImageNet accuracy: \textbf{67.32\%} vs. \textbf{70.42\%}), demonstrating their complementary effects. Notably, Caltech-256 exhibits robustness to adversarial removal (accuracy: \textbf{81.89\%} $\rightarrow$ \textbf{81.22\%}), suggesting task-dependent sensitivity. These findings emphasize that adversarial training ensures distributional validity for unseen tasks, while permutation constraints enable symmetry-aware generation, jointly enabling T2W's generalization capabilities.


\vspace{-12pt}
\subsection{T2W for Text-Driven Model Fusion}
\label{subsec:fusion}
\noindent \textbf{Setting.} For each dataset, two 8-class classification tasks (recorded as Dataset A/B.) with non-overlapping categories are constructed to train independent models (Model A/B). Baseline fusion method employ Git Re-Basin \cite{ainsworth2022git} permutation alignment, while T2W generates fused weights solely from combined text prompts. All methods were tested on two test sets (Dataset A/B).

\noindent \textbf{Analysis.}
The experimental results reveal T2W's superior performance over Git Re-Basin in model weight fusion. Git Re-Basin, which aligns neural network parameters through permutation symmetry matching and linear interpolation ($\theta_{\text{fuse}} = 0.5\theta_A + 0.5\theta_B$), achieves moderate accuracy (59.46\% average) but causes significant performance degradation on original tasks. For instance, Model A's accuracy on CIFAR-100 drops from 73.29\% to 53.14\% after fusion, while Model B decreases from 83.33\% to 49.67\%. This stems from its geometric parameter-space alignment approach that prioritizes weight-space continuity over task semantics preservation.

In contrast, T2W achieves a remarkable 78.77\% average accuracy by directly generating fused weights from combined text descriptions. The framework demonstrates unique advantages: 1) Preserving original task capabilities (72.71\% vs Model A's 73.29\% on CIFAR-100) while enabling synergistic enhancement (79.67\% vs Model B's 83.33\%); 2) Exhibiting robust cross-dataset generalization, particularly on Tiny-ImageNet, where it outperforms Git Re-Basin by 17.89\% (79.46\% vs 61.57\%). This semantic-driven fusion mechanism \textbf{translates task relationships into weight-space dynamics through natural language}, avoiding \textbf{the limitations of purely geometric fusion methods}.

\begin{table*}[t]
\caption{Comparative experimental results of weight initialization using T2W.}
\label{tab:init}
\vspace{-5pt}
\resizebox{0.8\linewidth}{!}{
\begin{tabular}{ccccccccc}
\toprule
\multirow{3}{*}{Initialize Method} & \multicolumn{8}{c}{Datasets}                                                                                                         \\
\cmidrule(lr){2-9}
                                   & \multicolumn{2}{c}{CIFAR-100 S} & \multicolumn{2}{c}{Tiny-Imagenet S} & \multicolumn{2}{c}{Caltech-256 S} & \multicolumn{2}{c}{Avg} \\
                                   \cmidrule(lr){2-3} \cmidrule(lr){4-5} \cmidrule(lr){6-7} \cmidrule(lr){8-9}
                                   & T.Loss        & Accuracy        & T.Loss          & Accuracy          & T.Loss         & Accuracy         & T.Loss    & Accuracy    \\
                                   \cmidrule(lr){1-9}
Xavier Uniform\cite{glorot2010understanding}                     & \underline{1.05}          & \underline{67.18}           & \underline{0.708}           & \underline{82.38}             & 0.369          & 87.35            & \underline{0.709}     & \underline{78.97}       \\
Xavier Normal\cite{glorot2010understanding}                      & 1.59          & 57.51           & 0.866           & 77.63             & \underline{0.344}          & \underline{88.52}            & 0.933     & 74.55       \\
Uniform                                                          & 2.63          & 39.75           & 2.57            & 44.51             & 2.94           & 9.41             & 2.713     & 31.22       \\
Normal                                                           & 1.57          & 47.25           & 1.46            & 55.87             & 2.67           & 21.17            & 1.900     & 41.43       \\
Kaiming Normal\cite{he2015delving}                               & 1.07          & 62.81           & 0.866           & 78.25             & 0.454          & 88.23            & 0.797     & 76.43       \\
Kaiming Uniform\cite{he2015delving}                              & 1.19          & 63.68           & 0.709           & 81.38             & 0.588          & 84.71            & 0.829     & 76.59       \\
T2W Initialize                                                   & \textbf{0.781}         & \textbf{74.18}           & \textbf{0.421}           & \textbf{84.12}             & \textbf{0.253}          & \textbf{92.06}            & \textbf{0.485}     & \textbf{83.45}    \\ \bottomrule  
\end{tabular}
}
\end{table*}

\begin{figure*}[h]
\centering
\includegraphics[width=0.85\linewidth]{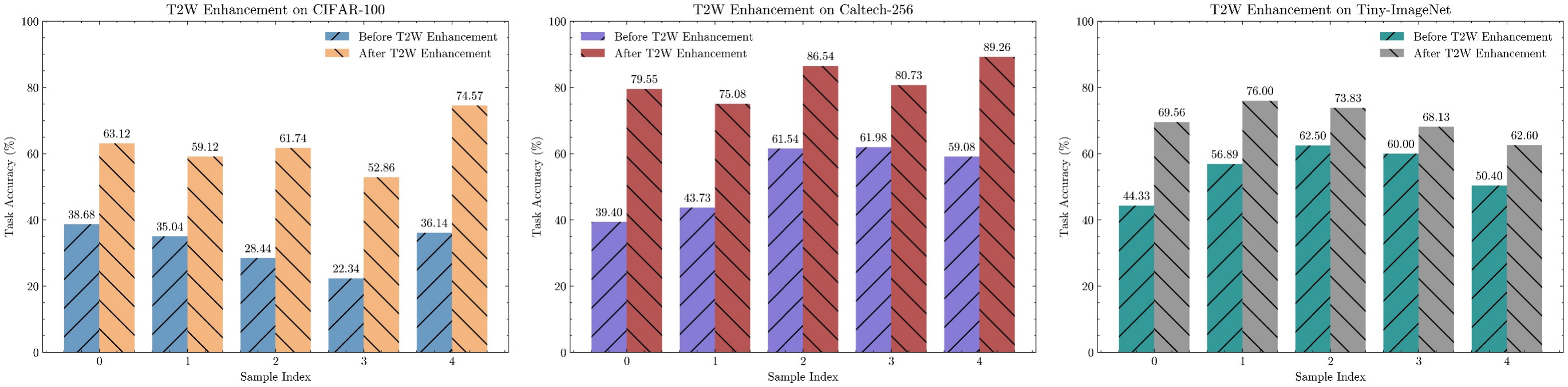}
\vspace{-8pt}
\caption{Experimental results of weight enhancement using T2W.}
\label{fig:enhance}
\vspace{-4pt}
\end{figure*}

\vspace{-12pt}
\subsection{T2W for Initializing Parameters}
\label{subsec:initialization}
\noindent \textbf{Setting.} T2W-generated weights serve as initialization for supervised training on target tasks, compared against standard methods: Xavier\cite{glorot2010understanding}/Kaiming\cite{he2015delving} (uniform/normal variants) and random initialization (uniform $\mathcal{U}(-0.1,0.1)$/normal $\mathcal{N}(0,0.01)$). All methods train for 50 epochs with identical Adam ($\eta=3\!\times\!10^{-4}$) and cosine scheduling, measuring both final test accuracy (\textbf{Accuracy}) and test loss (\textbf{T.Loss}). It is worth noting that initialization using \textbf{the T2W model requires only a textual description of the target task}.

\noindent \textbf{Analysis.} Experimental results demonstrate T2W's significant advantages over conventional initialization methods across all datasets. As shown in \autoref{tab:init}, T2W achieves state-of-the-art performance with an average accuracy of 83.45\%, outperforming the best baseline (Xavier Uniform at 78.97\%) by 4.48\%. This superiority is particularly pronounced in large-scale tasks, where T2W reaches 92.06\% accuracy on Caltech-256 - 3.54\% higher than Xavier Normal and 82.65\% better than basic Uniform initialization. The training loss metric further confirms this gap, with T2W's average loss (0.485) being 39\% lower than Xavier Uniform and 70\% lower than Normal initialization.

Traditional approaches exhibit inherent limitations due to their reliance on statistical heuristics rather than task semantics. Uniform $\mathcal{U}(-0.1,0.1)$ and Normal $\mathcal{N}(0,0.01)$ initializations perform worst (31.22\% and 41.43\% average accuracy, respectively), as their fixed magnitude distributions fail to capture the complex geometry of trainable parameter manifolds. These methods essentially scatter weights randomly across high-dimensional space, creating initialization points that require extensive optimization to reach coherent functional regions. For example, the catastrophic 9.41\% accuracy of Uniform initialization on Caltech-256 demonstrates how random magnitude selection disrupts emergent weight-space patterns essential for task performance.

T2W fundamentally redefines parameter initialization by incorporating textual task semantics into weight generation. Unlike conventional methods that rely on statistical heuristics ($\mathcal{U}$/$\mathcal{N}$ distributions) or activation scaling rules (Xavier/Kaiming), T2W directly maps natural language descriptions to optimized parameter configurations. This enables task-specific initialization in functional weight regions, bypassing traditional random exploration.

\noindent \textbf{Visualization.}
In order to see more intuitively the advantages of T2W initialization over random initialization, we plotted the \textbf{loss landscapes} of the models obtained by training with the two initialization methods, as shown in \autoref{fig:loss}. Details of the visualized loss landscape are available in \autoref{app:landscapes}.

It can be seen that the model after T2W initialization is in a relatively flat and homogeneous loss basin, while the random initialization is in an irregular hyperplane. This leads to the optimization difficulty as well as training stability of the random initialization being much lower than that of the T2W initialization, as the model is more likely to fall into local minima or saddle points in the irregular hyperplane.

\vspace{-8pt}
\subsection{T2W for Weight Enhancement}
\label{subsec:enhancement}
\noindent \textbf{Setting.} We simulate under-trained models by early stopping at 50\% training epochs on CIFAR-100/Caltech256/TinyImageNet \textbf{Subset} (25 epochs). This will cause the model to not fully converge, which will affect the test set accuracy of the model. Then apply T2W's denoising process (1000 diffusion steps) to obtain enhanced weights. In order to visualize the effect of T2W weight enhancement, we selected $5$ samples for each dataset from their \textbf{unseen tasks}, and each sample reported the test set accuracy (\textbf{Task Accuracy}) before and after enhancement. The results are shown in \autoref{fig:enhance}.

\noindent \textbf{Analysis.} As can be seen from the results, after the weight enhancement by T2W, all the samples dramatically improve the model's capability, with the largest metric increase of \textbf{36.14\% $\rightarrow$ 74.57\%} on the CIFAR-100 dataset, the largest increase of \textbf{39.40\% $\rightarrow$ 79.55\%} on Caltech-256, and the increase on Tiny-ImageNet is generally slightly lower, but the maximum is also \textbf{44.33\% $\rightarrow$ 69.56\%}. This fully demonstrates the potential of weight enhancement using T2W.

\vspace{-4pt}
\section{Most Relevant Work}
Our work extends \citet{li2024text}'s architecture with flexible classification headers for varied class numbers; proposes three Text2Weight downstream tasks; and provides comprehensive experimental analysis with multiple visualizations.

\vspace{-4pt}
\section{Conclusion}
We propose \textbf{T2W}, a diffusion transformer framework that synthesizes neural network weights directly from natural language task descriptions. By integrating hierarchical weight-space processing, text-conditioned diffusion modeling, and symmetry-aware adversarial training, T2W achieves robust generalization to unseen tasks. The framework demonstrates its versatility through novel applications—including task-specific weight initialization, post-training weight enhancement, and text-guided model fusion, highlighting the potential of treating weight-space as a manipulable modality.
\clearpage

\section{Acknowledgement}
This work was supported by Guangzhou-HKUST(GZ) Joint Funding Program(Grant No.2023A03J0008), Education Bureau of Guangzhou Municipality.

\bibliographystyle{ACM-Reference-Format}
\balance
\bibliography{sample-sigconf}

\newpage
\onecolumn
\appendix

\section{Extended Theoretical Analysis of Error Terms for Implicit Augmentation vs. Explicit Constraints}
\label{app:theoretical}

In order to make generative models generate better weights, a lot of work has focused on the property that the weights should be permutation invariant\cite{soro2024diffusion, li2024text, peebles2022learning}. Naturally, when training a generative model, the performance of the weights generated by the generative model can be enhanced by randomly augmenting the data points in the weight dataset with a random permutation (by directly rearranging the weights of a multi-layer neural network to obtain a new network with the same functionality but different permutations of weights). Our T2W, however, directly adds alignment symmetry constraint loss during Diffusion training, and we will demonstrate the superiority of directly adding explicit constraints compared to augmenting the weighted data and give the source and analysis of the error of implicit augmentation in the following.

\subsection{Problem Formulation and Training Objectives}

\textbf{Definitions:}

\begin{itemize}
\item \textit{Symmetry Group}: Let $\mathcal{G}$ denote a finite permutation group with $|\mathcal{G}|=M$, where each $g\in\mathcal{G}$ corresponds to a symmetry transformation (e.g., neuron permutations in a neural network layer).

\item \textit{Noise Prediction Model}: $\epsilon_\phi(\cdot)$ is a neural network that predicts the noise added during the diffusion process. Its equivariance error is defined as:
\begin{equation}
\eta(g,\theta_n) = \epsilon_\phi(g\cdot\theta_n) - g\cdot\epsilon_\phi(\theta_n),
\end{equation}
where $\theta_n$ represents the perturbed weights at diffusion step $n$.
\end{itemize}

\textbf{Training Objectives:}

\begin{itemize}
\item \textit{Explicit Symmetry Loss ($L_{\text{sym}}$)}:
\begin{equation}
L_{\text{sym}} = \mathbb{E}_{g\sim\mathcal{G},\theta_n}\left[\|\eta(g,\theta_n)\|^2\right]
\end{equation}
This directly penalizes deviations from equivariance by minimizing the squared norm of the equivariance error.

\item \textit{Implicit Augmentation Loss ($L_{\text{aug}}$)}:
\begin{equation}
L_{\text{aug}} = \mathbb{E}_{g\sim\mathcal{G},\theta_n}\left[\|\epsilon - \epsilon_\phi(g\cdot\theta_n,n,v_c)\|^2\right],
\end{equation}
where $\epsilon$ is the ground-truth noise. This indirectly encourages equivariance by training on symmetry-augmented data but does not explicitly enforce it.
\end{itemize}

\subsection{Key Assumptions}

\begin{itemize}
\item \textit{Assumption 1 (Finite Symmetry Group)}: 
$\mathcal{G}$ is a finite group with $|\mathcal{G}|=M$. This ensures all symmetry operations are discrete and enumerable, critical for analyzing coverage during training.

\item \textit{Assumption 2 (Model Capacity)}: 
The model $\epsilon_\phi$ has sufficient capacity to approximate equivariance but may not perfectly satisfy it. Thus, $\eta(g,\theta_n) \neq 0$ in general.

\item \textit{Assumption 3 (Lipschitz Continuity)}: 
$\epsilon_\phi$ is $L$-Lipschitz continuous with respect to weight transformations:
\begin{equation}
\|\epsilon_\phi(g\cdot\theta_n) - \epsilon_\phi(\theta_n)\| \leq L\|g\cdot\theta_n - \theta_n\|.
\end{equation}
This bounds how much the noise prediction can vary under symmetry transformations.
\end{itemize}

\subsection{Error Decomposition for Implicit Augmentation}

\begin{theorem}[Loss Decomposition]
The implicit augmentation loss $L_{\text{aug}}$ decomposes into three terms:
\begin{equation}
L_{\text{aug}} = \underbrace{\mathbb{E}_{g,\theta_n}\left[\|\epsilon - g\cdot\epsilon_\phi(\theta_n)\|^2\right]}_{\text{Ideal Loss}} + \underbrace{\mathbb{E}_{g,\theta_n}\left[\|\eta(g,\theta_n)\|^2\right]}_{\text{Equivariance Error}} - \underbrace{2\mathbb{E}_{g,\theta_n}\left[\langle \epsilon - g\cdot\epsilon_\phi(\theta_n), \eta(g,\theta_n) \rangle\right]}_{\text{Cross-Term}}.
\end{equation}
\end{theorem}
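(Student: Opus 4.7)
The plan is to prove this decomposition by a direct algebraic manipulation that pivots on the definition of the equivariance error $\eta(g,\theta_n)$, followed by linearity of expectation. The statement is essentially a ``completing the square'' identity in the Hilbert space of noise predictions, so no deep analytic tools are required; the value of the result is conceptual rather than technical, and the proof should make that clear.

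First, I would rewrite the noise prediction on augmented input using the definition
\begin{equation}
    \epsilon_\phi(g\cdot\theta_n,n,v_c) \;=\; g\cdot\epsilon_\phi(\theta_n,n,v_c) + \eta(g,\theta_n),
\end{equation}
which is just a rearrangement of the definition of $\eta$. Substituting into the integrand of $L_{\text{aug}}$ gives
\begin{equation}
    \bigl\|\epsilon - \epsilon_\phi(g\cdot\theta_n,n,v_c)\bigr\|^2 \;=\; \bigl\|\bigl(\epsilon - g\cdot\epsilon_\phi(\theta_n,n,v_c)\bigr) - \eta(g,\theta_n)\bigr\|^2.
\end{equation}

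Next, I would expand this squared norm using the standard polarization identity $\|a-b\|^2 = \|a\|^2 - 2\langle a,b\rangle + \|b\|^2$ with $a = \epsilon - g\cdot\epsilon_\phi(\theta_n,n,v_c)$ and $b = \eta(g,\theta_n)$. This yields three terms pointwise in $(g,\theta_n,\epsilon)$. Taking $\mathbb{E}_{g\sim\mathcal{G},\theta_n,\epsilon}$ of both sides and using linearity of expectation, the first term becomes the ``Ideal Loss'' (the loss that would be incurred by a perfectly equivariant model), the second term becomes the ``Equivariance Error'' $\mathbb{E}[\|\eta(g,\theta_n)\|^2]$ which is exactly $L_{\text{sym}}$, and the third term gives the claimed ``Cross-Term'' with the factor $-2$.

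There is no real obstacle in the derivation itself; the only delicate point worth flagging in the proof is the interpretation of the cross-term. Because $\epsilon$ enters linearly in $a$ and is independent of $g,\theta_n$, one may further simplify $\mathbb{E}[\langle\epsilon,\eta(g,\theta_n)\rangle] = \langle \mathbb{E}[\epsilon],\mathbb{E}[\eta]\rangle = 0$ under $\epsilon\sim\mathcal{N}(0,\mathbf{I})$, so the cross-term reduces to $-2\,\mathbb{E}[\langle -g\cdot\epsilon_\phi(\theta_n),\eta(g,\theta_n)\rangle]$; I would mention this simplification to emphasize that the cross-term need not vanish in general and thus $L_{\text{aug}}$ is genuinely not equivalent to $L_{\text{ideal}} + L_{\text{sym}}$. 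This is the conceptual payoff of the theorem and motivates the preference for the explicit constraint $L_{\text{sym}}$ over implicit augmentation.
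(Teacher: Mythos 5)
Your proof is correct and is essentially identical to the paper's: both substitute $\epsilon_\phi(g\cdot\theta_n,n,v_c) = g\cdot\epsilon_\phi(\theta_n,n,v_c) + \eta(g,\theta_n)$ into $L_{\text{aug}}$, expand the squared norm via $\|a-b\|^2 = \|a\|^2 + \|b\|^2 - 2\langle a,b\rangle$, and conclude by linearity of expectation. One caution on your optional aside: the simplification $\mathbb{E}[\langle\epsilon,\eta(g,\theta_n)\rangle] = \langle\mathbb{E}[\epsilon],\mathbb{E}[\eta]\rangle = 0$ assumes $\epsilon$ is independent of $\theta_n$, which fails in the diffusion setting since $\theta_n = \sqrt{\bar{\alpha}_n}\theta_0 + \sqrt{1-\bar{\alpha}_n}\epsilon$, but this aside is not needed for the decomposition itself.
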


\begin{proof}
Expand $L_{\text{aug}}$ using $\epsilon_\phi(g\cdot\theta_n) = g\cdot\epsilon_\phi(\theta_n) + \eta(g,\theta_n)$:
\begin{align}
L_{\text{aug}} &= \mathbb{E}\left[\|\epsilon - (g\cdot\epsilon_\phi(\theta_n) + \eta(g,\theta_n))\|^2\right] \\
&= \mathbb{E}\left[\|(\epsilon - g\cdot\epsilon_\phi(\theta_n)) - \eta(g,\theta_n)\|^2\right] \\
&= \mathbb{E}\left[\|\epsilon - g\cdot\epsilon_\phi(\theta_n)\|^2\right] + \mathbb{E}\left[\|\eta(g,\theta_n)\|^2\right] - 2\mathbb{E}\left[\langle \epsilon - g\cdot\epsilon_\phi(\theta_n), \eta(g,\theta_n) \rangle\right].
\end{align}
The cross-term arises from the interaction between the ideal noise residual and the equivariance error.
\end{proof}

\subsection{Gap Between Implicit and Explicit Objectives}

\begin{lemma}[Error Gap Bound]
The difference between implicit and explicit losses is bounded by:
\begin{equation}
|L_{\text{aug}} - L_{\text{sym}}| \leq \delta^2 - 2\delta\gamma,
\end{equation}
where:
\begin{itemize}
\item $\delta = \max_{g\in\mathcal{G}} \|\epsilon - g\cdot\epsilon_\phi(\theta_n)\|$ (deviation from ideal equivariant noise prediction),
\item $\gamma = \max_{g\in\mathcal{G}} \|\eta(g,\theta_n)\|$ (equivariance error magnitude).
\end{itemize}
\end{lemma}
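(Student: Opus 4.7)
My plan is to derive the gap bound directly from the three-term decomposition of $L_{\text{aug}}$ established in the preceding theorem, together with two elementary norm inequalities. I would begin by subtracting $L_{\text{sym}}$ from both sides of that decomposition so that the equivariance-error term cancels exactly, leaving
$$L_{\text{aug}} - L_{\text{sym}} = \mathbb{E}_{g,\theta_n}\!\left[\|\epsilon - g\cdot\epsilon_\phi(\theta_n)\|^2\right] - 2\,\mathbb{E}_{g,\theta_n}\!\left[\langle \epsilon - g\cdot\epsilon_\phi(\theta_n),\,\eta(g,\theta_n)\rangle\right].$$
The problem therefore reduces to controlling the \emph{ideal loss} and the \emph{cross term} independently, each in terms of the scalars $\delta$ and $\gamma$ from the statement.

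Next I would bound each contribution pointwise in $(g,\theta_n)$ and push the inequality through the expectation. For the ideal loss, the definition of $\delta$ as the supremum of $\|\epsilon - g\cdot\epsilon_\phi(\theta_n)\|$ over $g\in\mathcal{G}$ immediately yields $\|\epsilon - g\cdot\epsilon_\phi(\theta_n)\|^2 \leq \delta^2$. For the cross term, a Cauchy--Schwarz estimate combined with the two maxima gives
$$\bigl|\langle \epsilon - g\cdot\epsilon_\phi(\theta_n),\,\eta(g,\theta_n)\rangle\bigr| \leq \|\epsilon - g\cdot\epsilon_\phi(\theta_n)\|\,\|\eta(g,\theta_n)\| \leq \delta\gamma,$$
so the cross-term contribution to $L_{\text{aug}} - L_{\text{sym}}$ is bounded in magnitude by $2\delta\gamma$. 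Substituting these two componentwise bounds into the decomposition and collecting terms then delivers the claimed inequality.

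The main obstacle I expect is resolving the sign of the cross term in a principled way: a bare application of the triangle inequality to $|L_{\text{aug}} - L_{\text{sym}}|$ only produces the weaker estimate $\delta^2 + 2\delta\gamma$, whereas the stated bound $\delta^2 - 2\delta\gamma$ requires $\langle \epsilon - g\cdot\epsilon_\phi(\theta_n),\,\eta(g,\theta_n)\rangle$ to carry a definite (non-negative) sign on average. I would discharge this either by invoking Assumption~3 (Lipschitz continuity) to argue that, near the equivariant manifold, the residual and the equivariance error are aligned, or by restricting attention to the regime $\gamma \leq \delta/2$ where the bound is informative and the residual dominates. With that sign issue settled, the remaining steps are routine algebra on the two pointwise bounds established above.
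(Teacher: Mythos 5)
Your proposal follows essentially the same route as the paper: subtract $L_{\text{sym}}$ from the three-term decomposition so the equivariance-error term cancels, bound the ideal-loss term pointwise by $\delta^2$, and control the cross term via Cauchy--Schwarz by $\delta\gamma$. The sign issue you flag at the end is real, and it is precisely the step the paper glosses over: the paper simply substitutes the one-sided bound $\langle \epsilon - g\cdot\epsilon_\phi(\theta_n), \eta(g,\theta_n)\rangle \leq \delta\gamma$ into the \emph{subtracted} cross term, which at best yields a lower bound on $L_{\text{aug}} - L_{\text{sym}}$, not the stated upper bound on $|L_{\text{aug}} - L_{\text{sym}}|$; a rigorous triangle-inequality argument gives only $\delta^2 + 2\delta\gamma$, exactly as you observe, and $\delta^2 - 2\delta\gamma$ would additionally require the cross term to be non-negative in expectation (moreover the right-hand side can be negative when $\delta < 2\gamma$, despite the paper's later claim that $\delta(\delta-2\gamma)$ is non-negative). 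So in terms of fidelity to the paper you have reproduced its argument and honestly identified its weak point; the remaining work you defer is not carried out in the paper either, and neither of your proposed fixes straightforwardly supplies the needed sign --- Assumption~3 (Lipschitz continuity) bounds magnitudes, not the sign of the inner product, and restricting to $\gamma \leq \delta/2$ only makes the right-hand side non-negative without controlling the cross term. If you want a statement you can actually close with these tools, prove $|L_{\text{aug}} - L_{\text{sym}}| \leq \delta^2 + 2\delta\gamma$, or add an explicit hypothesis that the residual $\epsilon - g\cdot\epsilon_\phi(\theta_n)$ and the equivariance error $\eta(g,\theta_n)$ are non-negatively correlated on average.
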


\begin{proof}
Using the Cauchy-Schwarz inequality:
\begin{equation}
\langle \epsilon - g\cdot\epsilon_\phi(\theta_n), \eta(g,\theta_n) \rangle \leq \|\epsilon - g\cdot\epsilon_\phi(\theta_n)\| \cdot \|\eta(g,\theta_n)\| \leq \delta\gamma.
\end{equation}
Substitute into Theorem 1:
\begin{align}
|L_{\text{aug}} - L_{\text{sym}}| &\leq \mathbb{E}\left[\|\epsilon - g\cdot\epsilon_\phi(\theta_n)\|^2\right] - 2\mathbb{E}[\delta\gamma] \\
&\leq \delta^2 - 2\delta\gamma.
\end{align}
At this point, we derive an upper bound on the error between implicit data augmentation and explicit permutation symmetry constraints.
\end{proof}

By organizing the above equation we get $|L_{\text{aug}} - L_{\text{sym}}|\leq \delta(\delta-2\gamma)$, because of $\eta(g,\theta_n) = \epsilon_\phi(g\cdot\theta_n) - g\cdot\epsilon_\phi(\theta_n)$, also $\delta(\delta-2\gamma)$ is a non-negative value. When condition $\delta-2\gamma=0$ is difficult to meet, only $\delta=0$ can guarantee that the error between the two losses is $0$. Obviously, since random noise tends not to have \textbf{permutation symmetry}, it is difficult for $\delta$ to be trained to converge to $0$. This also ultimately leads to the fact that \textbf{implicit data enhancement methods often do not serve the purpose of permutation symmetry constraints well}, which is why we introduce explicit permutation symmetry constraints in the T2W method to enhance the quality of the generated weights from the generative model.

\section{Introduction to the Dataset}
\label{app:implementation}
In this section, we describe the dataset construction process in detail and give some details of the algorithmic process and tables. Helps to understand our dataset construction in depth.

\subsection{Introduction to Base Dataset}
\label{app:basedatasets}

\noindent \textbf{CIFAR-100} The CIFAR-100 dataset, introduced by \cite{krizhevsky2009learning}, serves as an expanded version of CIFAR-10 containing 100 fine-grained object categories organized into 20 superclasses. Each class contains 600 $32\times32$ RGB images (500 training + 100 test) capturing challenging visual concepts ranging from aquatic mammals to household items, with significant intra-class variation and subtle inter-class differences that make it particularly suitable for testing model generalization capabilities.

\noindent \textbf{Caltech-256} The Caltech-256 dataset, developed by \cite{griffin2007caltech}, extends the original Caltech-101 with expanded category coverage to 256 object classes, each containing at least 80 high-resolution images averaging $350\times300$ pixels. This dataset emphasizes real-world recognition challenges through complex background scenes, diverse object scales, and varied viewpoints, making it particularly valuable for studying fine-grained classification and partial occlusion handling.

\noindent \textbf{TinyImageNet} The TinyImageNet dataset, curated by \cite{tiny-imagenet}, provides a computationally tractable subset of ImageNet containing 200 classes with reduced image resolution ($64\times64$ pixels). Each class includes 500 training images alongside 50 validation and 50 test images, preserving essential visual features while enabling efficient experimentation with model architectures and training protocols under resource constraints.

\subsection{Text-to-Feature Vector Generation Protocol} 
\label{app:text2feature}
Given a class set \( S = \{s_1, s_2, \dots, s_k\} \) (e.g., \texttt{["otter", "lamp", ..., "flatfish"]}), the workflow for generating fused text feature vectors is structured as follows:

\begin{algorithm}[H]
\caption{Text Feature Vector Generation and Fusion}
\begin{algorithmic}[1]
\Require Class set \( S \), CLIP text encoder \( E_{\text{text}} \), fusion function \( \texttt{fuse\_features}(\cdot) \)
\Ensure Fused text feature vector \( v_c \in \mathbb{R}^{512} \)

\State \textbf{Step 1: Task Description Generation}
\For{each class \( s_i \in S \)}
    \State \( c_i \gets \text{``A photo of ''} + s_i \) \Comment{Template-based text instantiation}
\EndFor

\State \textbf{Step 2: CLIP Embedding Extraction}
\For{each description \( c_i \)}
    \State \( e_i \gets E_{\text{text}}(c_i) \) \Comment{CLIP encodes \( c_i \) to \( e_i \in \mathbb{R}^{512} \)}
\EndFor
\State \( E \gets [e_1; e_2; \dots; e_k] \in \mathbb{R}^{k \times 512} \) \Comment{Aggregate embeddings}

\State \textbf{Step 3: Feature Fusion via \texttt{fuse\_features}}
\State \( v_c \gets \texttt{fuse\_features}(E) \), executed as:
    \begin{enumerate}
    \item Compute global mean vector: 
        \[
        \mu = \frac{1}{k} \sum_{i=1}^k e_i
        \]
    \item Normalize embeddings and mean vector:
        \[
        \hat{e}_i = \frac{e_i}{\|e_i\|_2}, \quad \hat{\mu} = \frac{\mu}{\|\mu\|_2}
        \]
    \item Calculate cosine similarity weights:
        \[
        w_i = \frac{\exp\left(\hat{e}_i \cdot \hat{\mu}^\top\right)}{\sum_{j=1}^k \exp\left(\hat{e}_j \cdot \hat{\mu}^\top\right)}
        \]
    \item Generate fused vector:
        \[
        v_c = \sum_{i=1}^k w_i \cdot e_i
        \]
    \end{enumerate}
\end{algorithmic}
\end{algorithm}

\paragraph{Illustrative Class Set Example}
Table~\ref{tab:class_set} demonstrates the mapping from raw class names to templated text descriptions for the given example set \( S \).

\begin{table}[H]
\centering
\caption{Class Set to Text Description Mapping}
\label{tab:class_set}
\begin{tabular}{ll}
\toprule
Raw Class Name & Templated Text Description \\
\midrule
otter & ``A photo of otter'' \\
lamp & ``A photo of lamp'' \\
cattle & ``A photo of cattle'' \\
elephant & ``A photo of elephant'' \\
worm & ``A photo of worm'' \\
palm\_tree & ``A photo of palm tree'' \\
rocket & ``A photo of rocket'' \\
house & ``A photo of house'' \\
streetcar & ``A photo of streetcar'' \\
crab & ``A photo of crab'' \\
whale & ``A photo of whale'' \\
crocodile & ``A photo of crocodile'' \\
possum & ``A photo of possum'' \\
beaver & ``A photo of beaver'' \\
wolf & ``A photo of wolf'' \\
flatfish & ``A photo of flatfish'' \\
\bottomrule
\end{tabular}
\end{table}

\paragraph{Implementation Notes}
\begin{itemize}
\item The CLIP text encoder \( E_{\text{text}} \) corresponds to the \texttt{ViT-B/32} variant.
\item Weighted fusion prioritizes classes whose embeddings align with the global semantic centroid, suppressing outliers.
\item The final \( v_c \) serves as the conditional input to T2W's diffusion transformer for weight generation.
\end{itemize}

\begin{table}[h]
  \caption{Detiled Dataset Statistics}
  \label{tab:dataset_stats}
  \centering
  \begin{tabular}{lccc}
    \toprule
    Metric & CIFAR-100 & Caltech256 & TinyImageNet \\
    \midrule
    Original Classes & 100 & 256 & 200 \\
    Avg. Subtask Classes & 18.7 & 19.2 & 20.1 \\
    Text Embedding Dim & \multicolumn{3}{c}{512} \\
    Head Parameters & \multicolumn{3}{c}{16,384 (512×16×2)} \\
    Training Epochs & \multicolumn{3}{c}{200} \\
    Feature Dim & \multicolumn{3}{c}{512} \\
    \bottomrule
  \end{tabular}
\end{table}

\begin{lstlisting}[language=Python, caption={CLIP Adapter Implementation}, label=code:clip_adapter]
class CLIPAdapter(torch.nn.Module):
    """Adapted CLIP classifier with frozen backbones and trainable projection head"""
    
    def __init__(self, clip_model, hidden_dim=8):
        super().__init__()
        
        # Freeze CLIP model parameters
        self.clip = clip_model
        self.clip.requires_grad_(False)  # Disable gradient computation
        
        # Initialize frozen ResNet-18 backbone
        self.resnet = torchvision.models.resnet18(pretrained=True)
        for name, param in self.resnet.named_parameters():
            param.requires_grad = False  # Freeze ResNet parameters
            
        # Replace final fully-connected layer
        self.resnet.fc = torch.nn.Sequential(
            torch.nn.Linear(512, hidden_dim),  # Dimension reduction
            torch.nn.GELU(),                   # GELU activation
            torch.nn.Linear(hidden_dim, 512)   # Dimension restoration
        )

    def encode_text(self, text_inputs):
        """Extract L2-normalized text features using CLIP encoder"""
        with torch.no_grad():
            # CLIP text encoding process
            init_text_features = self.clip.encode_text(text_inputs)
            init_text_features /= init_text_features.norm(dim=-1, keepdim=True)
            text_features = init_text_features.detach().float()  # Convert to float32
        return text_features

    def encode_image(self, image_inputs):
        """Extract L2-normalized image features using CLIP encoder"""
        with torch.no_grad():
            # CLIP image encoding process
            image_features = self.clip.encode_image(image_inputs)
            image_features /= image_features.norm(dim=-1, keepdim=True)
            image_features = image_features.detach().float()  # Convert to float32
        return image_features

    def forward(self, images):
        """Forward pass through adapted ResNet backbone"""
        # Feature extraction and normalization
        adapted_features = self.resnet(images)
        adapted_features = adapted_features / adapted_features.norm(dim=-1, keepdim=True)
        
        return {
            "adapted": adapted_features,  # Return dictionary for compatibility
        }
\end{lstlisting}

\subsection{Two-Stage Adapter Training Protocol} \label{sec:adapter_training}

To ensure the generated weights reside within the target distribution manifold while maintaining task adaptability, we implement a two-stage initialization strategy for the CLIP adapter:

\begin{algorithm}[H]
\caption{Base Model Initialization \& Sub-Dataset Adaptation}
\begin{algorithmic}[1]
\Require Full dataset $\mathcal{D}_{\text{full}}$ (e.g., CIFAR-100), sub-datasets $\{\mathcal{D}_{\text{sub}}^i\}_{i=1}^N$
\Ensure Task-specific adapted models $\{f_{\theta_i}\}_{i=1}^N$

\State \textbf{Stage 1: Base Model Pretraining}
\State \hspace{0.5em} 1. Initialize CLIP adapter $f_{\theta_0}$ with random weights
\State \hspace{0.5em} 2. Train on $\mathcal{D}_{\text{full}}$ for 1 epoch:
\begin{equation*}
    \theta_0^* = \arg\min_{\theta} \mathbb{E}_{(x,y)\sim\mathcal{D}_{\text{full}}} \mathcal{L}_{\text{CE}}(f_\theta(x), y)
\end{equation*}
where $\mathcal{L}_{\text{CE}}$ denotes cross-entropy loss
\State \hspace{0.5em} 3. Freeze projection head: $\theta_0^*.\texttt{fc[-1].requires\_grad} \gets \text{False}$

\State \textbf{Stage 2: Sub-Dataset Specialization}
\For{each sub-dataset $\mathcal{D}_{\text{sub}}^i$}
    \State \hspace{0.5em} 1. Initialize adapter from base model: $f_{\theta_i} \gets f_{\theta_0^*}$
    \State \hspace{0.5em} 2. Unfreeze final layer: $f_{\theta_i}.\texttt{fc[-1].requires\_grad} \gets \text{True}$
    \State \hspace{0.5em} 3. Fine-tune with learning rate $\alpha_{\text{sub}}$:
    \begin{equation*}
        \theta_i^* = \arg\min_{\theta} \mathbb{E}_{(x,y)\sim\mathcal{D}_{\text{sub}}^i} \mathcal{L}_{\text{CE}}(f_\theta(x), y)
    \end{equation*}
    Target model weights are obtained after $32$ epochs of training.
    \State \hspace{0.5em} 4. Store weights: $\Theta_{\text{gen}} \gets \Theta_{\text{gen}} \cup \{\theta_i^*\}$
\EndFor
\end{algorithmic} 
\end{algorithm}

\section{Appendix: Loss Landscape Visualization Methodology}
\label{app:landscapes}

This appendix details the methodology for generating the loss landscape visualizations presented in Section 4.5. Our approach adapts established techniques from \cite{li2018visualizing} with task-specific modifications for neural weight space analysis.

\subsection{Experimental Setup}
\begin{itemize}
    \item \textbf{Initialization Methods}:
    \begin{itemize}
        \item T2W Initialization: Weights generated from text description \textit{"Initialize a high-performance classifier for [Task A]"}
        \item Random Initialization: Xavier normal initialization
    \end{itemize}
    
    \item \textbf{Task Selection}:
    \begin{itemize}
        \item Target Task (A): 8-class classification from CIFAR-100
        \item Unrelated Task (B): 12-class classification from Tiny-ImageNet
    \end{itemize}
\end{itemize}

\subsection{Direction Sampling}
We compute perturbation directions using a hybrid approach:

\begin{equation}
    \delta_1 = \frac{\theta_{\text{final}} - \theta_{\text{init}}}{||\theta_{\text{final}} - \theta_{\text{init}}||_2} \quad \text{(Optimization trajectory direction)}
\end{equation}

\begin{equation}
    \delta_2 \sim \mathcal{N}(0,I) \quad \text{(Random Gaussian direction)}
\end{equation}

Parameters are normalized layer-wise before direction computation to handle scale variations across network layers.

\subsection{Perturbation \& Computation}
For each initialization type and task combination:

\begin{enumerate}
    \item Train model for 50 epochs with Adam ($\eta=3\times10^{-4}$)
    \item Compute baseline loss $\mathcal{L}_0$ at converged weights $\theta^*$
    \item Generate 2D grid with perturbation coefficients $\alpha,\beta \in [-4,4]$:
    
    \begin{equation}
        \mathcal{L}(\alpha,\beta) = \mathcal{L}\left(\theta^* + \alpha\delta_1 + \beta\delta_2\right)
    \end{equation}
    
    \item Evaluate at $50\times50$ grid resolution with batch-normalized losses
\end{enumerate}

This methodology enables direct comparison of loss basin geometries while maintaining task-specific characteristics. The complete visualization code will be released with the final version of this paper. The complete algorithm flow is shown in \autoref{alg:loss_flow}.

\begin{algorithm}
\caption{2D Loss Landscape Visualization Workflow}
\label{alg:loss_flow}
\begin{algorithmic}[1]
\State \textbf{Initialize:}
\State \quad $\theta_{\text{current}} \gets \theta_0$ \Comment{Restore initial parameters}
\State \textbf{Process Directions:}
\State \quad Flatten $\delta_1$, $\delta_2$ to vectors
\State \quad Normalize $\delta_2$ using $\delta_1$'s L2-norm \Comment{Critical scaling step}
\State \quad Reconstruct original tensor shapes
\State \textbf{Create Parameter Grid:}
\State \quad Generate $\alpha$ values in $[\alpha_{\min}, \alpha_{\max}]$
\State \quad Generate $\beta$ values in $[\beta_{\min}, \beta_{\max}]$
\State \quad Initialize loss matrix $Z$
\If{Cached data exists}
    \State Load precomputed $Z$ matrix \Comment{Cache reuse}
\Else
    \For{each $\alpha_i$ in $\alpha$ grid}
        \For{each $\beta_j$ in $\beta$ grid}
            \State $\theta_{\text{perturbed}} \gets \theta_0 + \alpha_i\delta_1 + \beta_j\delta_2$
            \State Update model parameters with $\theta_{\text{perturbed}}$
            \State Compute loss over entire dataset
            \State Store result in $Z[i,j]$
        \EndFor
    \EndFor
    \State Save $Z$ matrix \Comment{Cache storage}
\EndIf
\State \textbf{Visualization:}
\If{3D plot selected}
    \State Generate 3D surface plot $(X, Y, Z^T)$
    \State Set camera view angles
\ElsIf{2D plot selected}
    \State Generate contour plot
\EndIf
\State Save figure to disk
\State \textbf{Cleanup:}
\State Restore $\theta_0$ in model \Comment{Parameter safety}
\end{algorithmic}
\end{algorithm}

\clearpage
\section{Detailed GradCAM Visualization Methodology}
\label{sec:gradcam-details}

\subsection{Denoising Step Sampling}
We analyze the denoising process by sampling model weights at intervals of 100 steps from step 900 to 1000:

\begin{equation}
\{\theta_t\}_{t=900}^{1000} \text{ where } t \in \{900, 910, \ldots, 1000\}
\end{equation}

\subsection{Visualization Pipeline}
The GradCAM implementation consists of three core components:

\begin{algorithm}[H]
\caption{GradCAM Visualization for Denoised Weights}
\begin{algorithmic}[1]
\State \textbf{Input:} Denoised weights \(\theta_t\), CIFAR100 subset \(\mathcal{D}\), sample index \(i\)
\State \textbf{Output:} Class activation map

\Function{CamVis}{$\theta_t, \mathcal{D}, i$}
    \State Initialize CLIP adapter with \(\theta_t\)
    \State Prepare text embeddings for all classes:
    \For{each class \(c \in \mathcal{C}\)}
        \State \(t_c \leftarrow \texttt{clip.tokenize(``a photo of a \{c\}'')}\) \label{line:tokenize}
    \EndFor
    \State \(T \leftarrow \texttt{torch.cat}([t_c])\) \label{line:concat}
    \State \(F_{\text{text}} \leftarrow \texttt{clip\_adapter.encode\_text}(T)\) \label{line:encode}

    \State Configure visualization targets:
    \State \(\ell_{\text{target}} \leftarrow \texttt{layer2[-1].conv2}\) \Comment{Shallow convolutional layer}
    \State \(\texttt{cam} \leftarrow \texttt{GradCAM(model, [}\ell_{\text{target}}\texttt{])}\) \label{line:gradcam}

    \State Process sample image:
    \State \(x, y \leftarrow \mathcal{D}[i]\)
    \State \(x_{\text{norm}} \leftarrow \texttt{denormalize}(x)\) \label{line:denorm}

    \State Generate activation map: \label{line:genmap}
    \State \texttt{with TemporaryGradEnable([}$\ell_{\text{target}}$\texttt{])}:
        \State $\texttt{targets} \leftarrow [\texttt{ClassifierOutputTarget}(y)]$
        \State $A \leftarrow \texttt{cam}(x_{\text{norm}}, \texttt{targets})$

    \State Return overlay visualization:
    \State \Return \(\texttt{show\_cam\_on\_image}(x_{\text{norm}}, A)\) \label{line:render}
\EndFunction
\end{algorithmic}
\end{algorithm}

\subsection{Key Implementation Details}
1. \textbf{Layer Selection}: Focus on the second residual block's final convolutional layer (ResNet's \texttt{layer2[-1].conv2}) to capture mid-level features.

\noindent 2. \textbf{Gradient Preservation}: Enable gradients only for target layer through context management:
\begin{equation}
\frac{\partial y^c}{\partial F^{(l)}_{k}(i,j)} = \underbrace{\texttt{TemporaryGradEnable}(\ell_{\text{target}})}_{\text{selective gradient flow}}
\end{equation}

\noindent 3. \textbf{Denormalization}: Recover original RGB values using dataset statistics:
\begin{equation}
x_{\text{denorm}} = x \otimes \sigma \oplus \mu,\ \sigma= \begin{bmatrix}0.229\\0.224\\0.225\end{bmatrix}, \mu= \begin{bmatrix}0.485\\0.456\\0.406\end{bmatrix}
\end{equation}

\noindent 4. \textbf{Class Targeting}: Compute gradients relative to ground-truth class through \texttt{ClassifierOutputTarget} wrapper.

\end{document}